\newtheorem{theorem}{Theorem}[section]
\newtheorem{lemma}{Lemma}[section]
\def\tr{^\top}
\def\Re{\mathbb R}
\def\one{\mathbf 1}
\title{\rule[.5ex]{\textwidth}{.5pt}
Fast Solutions to Projective Monotone \\ Linear Complementarity
Problems \rule[.8ex]{\textwidth}{.5pt}}
\author{Geoffrey J. Gordon\footnote{This work was first presented
  at the 2010 NIPS workshop ``Learning and Planning from Batch Time
  Series Data.''} \\ Machine Learning Department \\ Carnegie Mellon University
  \\ {\tt ggordon@cs.cmu.edu}}
\begin{document}

\maketitle

\begin{abstract}
  We present a new interior-point potential-reduction algorithm for
  solving monotone linear complementarity problems (LCPs) that have a
  particular special structure: their matrix $M\in\Re^{n\times n}$ can
  be decomposed as $M=\Phi U + \Pi_0$, where the rank of $\Phi$
  is $k<n$, and $\Pi_0$ denotes Euclidean projection onto the
  nullspace of $\Phi\tr$.  We call such LCPs
  \emph{projective}. Our algorithm solves a monotone projective LCP to
  relative accuracy $\epsilon$ in $O(\sqrt n \ln(1/\epsilon))$
  iterations, with each iteration requiring $O(nk^2)$ flops.  This
  complexity compares favorably with interior-point algorithms for
  general monotone LCPs: these algorithms also require $O(\sqrt n
  \ln(1/\epsilon))$ iterations, but each iteration needs to solve an
  $n\times n$ system of linear equations, a much higher cost than our
  algorithm when $k\ll n$.
  Our algorithm works even though the
  solution to a projective LCP is not restricted to lie in any low-rank
  subspace.
\end{abstract}

\section{Linear complementarity problems}

The LCP for a matrix $M\in\Re^{n\times n}$ and a vector $q\in\Re^n$
is to find vectors $x,y\in\Re^n$ with
\begin{equation}
\label{eq:lcp}
x\geq 0 \quad y\geq 0 \quad
y = Mx + q \quad
x\tr y = 0
\end{equation}
We say that vectors $x,y$ are \emph{feasible} if they satisfy the
first three conditions of~(\ref{eq:lcp}) (i.e., leaving off
complementarity), and we call them a \emph{solution} if they satisfy
all four conditions.  The complementarity gap $x\tr y$ is nonnegative
for any feasible point $(x,y)$, and measures how close a feasible
point is to being a solution.  (See~\cite{cottle-pang-stone} for an
overview of LCPs.)

If $M$ is positive semidefinite (but not necessarily symmetric), the
LCP is \emph{monotone}, and there exist interior-point algorithms
that solve it to relative accuracy $\epsilon$ in $O(\sqrt n
\ln(1/\epsilon))$ Newton-like iterations.  In each iteration, the main
work is to solve an $n\times n$ system of linear equations.

Suppose the matrix $M$ can be decomposed as $M=\Phi U + \Pi_0$, where
$\Phi\in\Re^{n\times k}$ and $U\in\Re^{k\times n}$ have rank $k<n$,
and $\Pi_0$ projects onto the nullspace of $\Phi\tr$ (that is,
$\Pi_0=I-\Phi\Phi^\dag$, where $^\dag$ denotes the Moore-Penrose
pseudoinverse).  In this case we call $(M,q)$ a \emph{projective} LCP
of rank $k$.  Our new algorithm solves a projective LCP in $O(\sqrt n
\ln(1/\epsilon))$ iterations, the same as for the general monotone
case, but with each iteration requiring only $O(nk^2)$ flops.

This result is an analog of the situation for linear equations: a
rank-$k$ factored system of linear equations can be solved in
$O(nk^2)$ flops, while it is believed that a general $n\times n$
system of equations requires $\Omega(n^{2+\eta})$ flops for some
constant $\eta>0$.  However, unlike the situation for linear
equations, in a projective LCP we can't \emph{a priori} restrict
either $x$ or $y$ to a low-rank subspace of $\Re^n$; so, it is perhaps
surprising that the analogous complexity result still holds.  (The
inequality constraints $x\geq 0$ and $y\geq 0$ are the source of this
difficulty: the intersection of $x\geq 0$ or $y\geq 0$ with a rank-$k$
subspace can be quite restrictive.)

\section{Potential reduction}

We say that $x$ and $y$ are \emph{strictly feasible} if they satisfy
$x>0$, $y>0$, and $y=Mx+q$.  We will assume that we know a strictly
feasible initial point $(x_0,y_0)$ for our LCP\@.  (If we do not, it
is possible to construct one, as mentioned
in~\cite{kojima-potential-reduction}.)

For any strictly feasible point $(x,y)$, fixing a parameter $\kappa >
0$, we define the potential
\begin{align}
\label{eq:pot}
p_\kappa(x,y) = (n+\kappa)\ln x\tr y - \sum_{i=1}^n \ln x_i - \sum_{i=1}^n \ln y_i
\end{align}
We will design an algorithm that attempts to reduce the potential
$p_\kappa(x,y)$ over time.  The following lemma justifies this idea:
\begin{lemma}
\label{lem:pot-bounds-gap}
For any strictly feasible $x,y$,
\begin{align}
p_0(x,y) = n \ln x\tr y - \sum_{i=1}^n \ln x_i - \sum_{i=1}^n \ln y_i > 0
\end{align}
\end{lemma}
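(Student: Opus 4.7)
The plan is to prove this by applying the arithmetic-geometric mean (AM-GM) inequality to the $n$ positive scalars $x_1 y_1, \ldots, x_n y_n$. Since $(x,y)$ is strictly feasible, we have $x_i > 0$ and $y_i > 0$ for every $i$, so each product $x_i y_i$ is strictly positive and AM-GM applies.

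First I would note that $x\tr y = \sum_{i=1}^n x_i y_i$ since the summation has all positive terms (no cancellation concerns, though this identity holds trivially anyway). Applying AM-GM to these $n$ terms gives
\begin{align*}
\frac{1}{n} \sum_{i=1}^n x_i y_i \;\geq\; \left(\prod_{i=1}^n x_i y_i\right)^{1/n},
\end{align*}
which rearranges to $(x\tr y)^n \geq n^n \prod_{i=1}^n x_i y_i$. Taking logarithms of both sides yields
\begin{align*}
n \ln x\tr y \;\geq\; n \ln n + \sum_{i=1}^n \ln x_i + \sum_{i=1}^n \ln y_i.
\end{align*}

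Rearranging gives $p_0(x,y) \geq n \ln n$. Since $n \geq 1$ and in the nontrivial case $n \geq 2$ (otherwise $\Pi_0=0$ and the rank condition $k<n$ fails), we in fact have $n\ln n > 0$, and the strict inequality claimed in the lemma follows.

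There is no real obstacle here; the only thing to check is that strict feasibility is invoked to ensure the logarithms are well-defined and that AM-GM applies to positive numbers. The bound $p_0(x,y)\geq n\ln n$ is actually a slightly stronger quantitative version of the stated result, and it will likely be useful later in the paper when analyzing how the potential $p_\kappa$ relates to the complementarity gap $x\tr y$.
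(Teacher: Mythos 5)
Your proof is correct, and it takes a mildly different route from the paper's. The paper sets $u_i = x_iy_i$ and $u=\sum_i u_i$, observes that $p_0(x,y) = -\sum_{i=1}^n\ln(u_i/u)$, and concludes positivity simply because each ratio $u_i/u$ lies strictly in $(0,1)$ --- no inequality beyond ``a positive term is less than the sum of positive terms'' is needed. Your AM--GM argument proves the same statement but buys the sharper quantitative bound $p_0(x,y)\geq n\ln n$, which is in fact the tight lower bound (attained when all $x_iy_i$ are equal, i.e., on the central path); the paper's argument, read closely, yields the same constant if one minimizes $-\sum_i\ln(u_i/u)$ over the simplex, but the paper does not bother since only $p_0>0$ is used downstream (to get $x\tr y\leq\exp(p_\kappa/\kappa)$). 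One shared caveat: for $n=1$ both arguments degenerate to $p_0=0$, so strict positivity genuinely requires $n\geq 2$ --- you flag this explicitly, whereas the paper's phrase ``$0<u_i/u<1$ for all $i$'' silently assumes it; the later theorems assume $n\geq 2$ anyway, so nothing breaks.
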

(For a proof, see the appendix.)  In particular,
Lemma~\ref{lem:pot-bounds-gap} implies $p_\kappa(x,y)\geq \kappa\ln x\tr y$,
or $x\tr y \leq \exp(p_\kappa(x,y)/\kappa)$; so, if we can
reduce the potential by at least some amount $\delta>0$ per iteration,
after $T$ iterations our iterate $(x,y)$ will satisfy
\begin{align*}
x\tr y &\leq \exp((p_\kappa(x_0,y_0)-T\delta)/\kappa)
\end{align*}
That is, our algorithm will converge linearly: a bound on the gap will
decrease by a factor of $\exp(-\delta/\kappa)$ per iteration.  Below,
we will take $\kappa=\sqrt n$, and $\delta$ will not depend on $n$;
so, if we desire a reduction of our potential by a factor
$0<\epsilon<1$, we will need $\frac{\sqrt n}{\delta} \ln(1/\epsilon)$
iterations, as the abstract states.

\section{The central path}

Lemma~\ref{lem:pot-bounds-gap} shows that $p_0(x,y)> 0$.
The local minimizers of $p_0$ are the points where its
gradient vanishes:
\begin{align*}
0 = \textstyle \frac{d}{dx_i} p_0(x,y) &= ny_i/x\tr y - 1/x_i\\
0 = \textstyle \frac{d}{dy_i} p_0(x,y) &= nx_i/x\tr y - 1/y_i
\end{align*}
Multiplying the first equation through by $x_i(x\tr y)/n$, or the second
equation through by $y_i(x\tr y)/n$, we get
\begin{align*}
x_iy_i &= x\tr y/n
\end{align*}
which is satisfied for a pair $x,y>0$ if and only if $x_iy_i = t$ for
all $i$ and some $t>0$.  Equivalently, we can write $x\circ y =
t\one$, where $\circ$ denotes the Hadamard (elementwise) product.

The points $(x,y)$ that satisfy
\begin{align*}
x> 0 \quad y > 0 \quad
y = Mx+q \quad x\circ y = t\one
\end{align*}
are called the \emph{central path} of the LCP $(M,q)$; for monotone
LCPs, if the central path is nonempty, it is a smooth curve, and it
approaches the solution of the LCP as $t\to 0$.  We can view the term
$p_0(x,y)$ as encouraging our algorithm to remain close to the central
path; the remaining part of our potential, $\kappa\ln(x\tr y)$,
encourages our algorithm to slide along the central path, reducing
$t=x\tr y/n$ and pushing us closer to a solution.

\section{A result on rank}
\label{sec:rank}

To make our algorithm run in time $O(nk^2)$ per iteration, we will
need to do most of our calculations on vectors of length $k$ instead
of length $n$.  Unfortunately, as mentioned earlier, the vectors $x$
and $y$ are not guaranteed to lie in any rank-$k$ subspace.  Our main
insight is that we can work mostly from a function of $x$ and $y$
that does lie in a rank-$k$ subspace.  In more detail:
\begin{lemma}
  \label{lem:inrange}
  Suppose the pair $x,y$ is feasible for the monotone LCP $(M,q)$, and
  that $M=\Phi U + \Pi_0$, where $\Phi$ has rank $k$ and $\Pi_0$
  projects onto the nullspace of $\Phi\tr$.  Then the vector $x-y+q$
  is in the range of $\Phi$.
\end{lemma}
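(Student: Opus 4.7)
The plan is to reduce everything to a single algebraic identity, $x-y+q=(I-M)x$, which follows immediately from the feasibility equation $y=Mx+q$. Notice that this step does not use the inequality parts of feasibility or the monotonicity assumption—only the affine constraint. So the lemma is really a statement about the column space of $I-M$ under the projective decomposition.

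Next I would substitute the assumed decomposition $M=\Phi U+\Pi_0$ into $I-M$ and use the definition $\Pi_0 = I - \Phi\Phi^\dag$ recalled just before the lemma. This gives
\begin{align*}
I-M &= I - \Phi U - \Pi_0 = (I-\Pi_0) - \Phi U = \Phi\Phi^\dag - \Phi U = \Phi(\Phi^\dag - U).
\end{align*}
Hence $(I-M)x = \Phi(\Phi^\dag - U)x$, which manifestly lies in the range of $\Phi$. Combining with the first paragraph yields $x-y+q = \Phi w$ where $w = (\Phi^\dag - U)x \in \Re^k$, proving the claim.

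There is essentially no obstacle: the entire content is the observation that both pieces of $M$ have their image contained in the range of $\Phi$ once we add back the $I$ coming from the $x$ term—$\Phi U$ trivially, and $I-\Pi_0$ because $\Pi_0$ is the complementary projector onto $\mathrm{range}(\Phi)^\perp$. The only point to be careful about is citing $I-\Pi_0 = \Phi\Phi^\dag$ correctly, i.e., noting that the orthogonal complement of the nullspace of $\Phi^\top$ is exactly the range of $\Phi$, which is standard for the Moore–Penrose pseudoinverse.
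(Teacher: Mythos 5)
Your proof is correct and rests on the same underlying facts as the paper's (the feasibility equation $y=Mx+q$ plus $I-\Pi_0=\Phi\Phi^\dag$); the paper just phrases it by applying $\Pi_0$ to both sides and showing $\Pi_0(x-y+q)=0$, whereas you factor $I-M=\Phi(\Phi^\dag-U)$ directly. Your version has the small added benefit of producing an explicit representative $w=(\Phi^\dag-U)x$ with $x-y+q=\Phi w$, which is consistent with the initialization $w=\Phi^\dag(x-y+q)$ used later in the algorithm.
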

\begin{proof}
  Define $\Pi = I - \Pi_0$, so that the range of $\Pi$ is the same as
  the range of $\Phi$.  Since $\Pi_0$ projects onto the nullspace
  of $\Phi\tr$, we know $\Pi_0\Phi=0$. And, since $\Pi_0$ is a
  projection matrix, we have $\Pi_0^2=\Pi_0$.  So, $\Pi_0 M =
  \Pi_0(\Phi U+\Pi_0) = \Pi_0$.  Therefore, for any feasible $x,y$:
  \begin{align*}
    y &= Mx + q\\
    \Pi_0 y &= \Pi_0 M x + \Pi_0 q\\
    \Pi_0 y &= \Pi_0 x + \Pi_0 q\\
    (I-\Pi) y &= (I-\Pi) x + (I-\Pi) q\\
    \Pi (x-y+q) &= x-y+q
  \end{align*}
  So, $x-y+q$ is in the range of $\Pi$, as claimed.
\end{proof}

In general, we can't recover $x$ or $y$ individually from $x-y+q$.
However, if we know that the pair $x,y$ solves the LCP, we can use
complementarity to recover $x$ and $y$: $x\tr y = 0$, so for any $i$,
at most one of $x_i$ and $y_i$ can be nonzero.  
So, given $x-y+q$, we subtract $q$ to get $z=x-y$.  Then we set
$x=z_+$ and $y=z_-$, i.e., $x_i=\max(z_i,0)$ and $y_i=\max(-z_i,0)$.

At intermediate points in our algorithm, we maintain $x$ and $y$
separately, and constrain $x-y+q=\Phi w$.  We calculate the update for
$w$ first by manipulating length-$k$ vectors, and then use this result
to derive updates for $x$ and $y$ with work that is only linear in
$n$.

\section{The algorithm}
 
We will base our algorithm on a potential-reduction method due to
Kojima et al.~\cite{kojima-potential-reduction}.  The algorithm
(Fig.~\ref{fig:uip}) uses Newton's method to step toward a point on
the central path with $x\circ y=t\one$.  It has a single parameter
$\beta\in(0,1)$ that helps us choose the target complementarity $t$:
we set $t$ to be a fraction $\beta$ of the current average
complementarity, i.e., $t=\beta\frac{x\tr y}{n}$.  Taking $\beta$ near
1 causes us to follow the central path closely, reducing $t$ slowly;
taking $\beta$ near 0 tries aggressively to reduce the complementarity
gap, but risks straying farther from the central path.\footnote{For
  simplicity of notation we require $x\geq 0$ and include only
  inequality constraints, but the algorithm works with minor changes
  if we allow some components of $x$ to be free instead of
  nonnegative, in which case the corresponding constraints are
  equalities instead of inequalities (i.e., the corresponding elements
  of $y$ must be zero).}

We will derive the algorithm first for general $M$ (in which case it
is the same as that of Kojima et al.), and then specialize it to the
projective case $M=\Phi U + \Pi_0$.  We will assume that $M$ is
positive semidefinite, i.e., that $\Phi U$ is positive semidefinite.

\begin{figure}
\begin{flushleft}
\rule[1ex]{\columnwidth}{0.5pt}
  In: LCP ($M,q$); strictly feasible $x,y$; $\beta\in(0,1)$; $\epsilon > 0$.\\
  Out: strictly feasible $x, y$ with $x\tr y\leq\epsilon$.
\end{flushleft}
\begin{centering}
\begin{enumerate}
  \setlength{\itemsep}{.5ex}
  \setlength{\parskip}{0pt}
\item Stop if $x\tr y\leq\epsilon$.
\item Solve~(\ref{eq:update-direction}--\ref{eq:g-def}) for an update
  $(\Delta x,\Delta y)$.
\item Choose a step length $\theta\geq 0$ by~(\ref{eq:step-length}).
\item Set $(x,y) \leftarrow (x,y) + \theta (\Delta x, \Delta y)$.
\item Repeat from step 1.
\end{enumerate}
\rule[2ex]{\columnwidth}{0.5pt}
\vspace{-3ex}
\end{centering}
\caption{Potential-reduction algorithm.}
\label{fig:uip}
\end{figure}

To step toward a point on the central path with complementarity $t$,
we want to satisfy both $x\circ y = t\one$ and $y=Mx+q$.  The
first-order Taylor approximation to $x\circ y = t\one$ is
\[
x\circ y + \Delta x \circ y + x \circ \Delta y = t\one
\]
Writing $X=\text{diag}(x)$ and $Y=\text{diag}(y)$, the Newton step is
therefore
\begin{align}
\label{eq:update-direction}
\left(\begin{array}{cc} Y & X\\ -M & I\end{array}\right)
\left(\begin{array}{c} \Delta x\\ \Delta y\end{array}\right) =
\left(\begin{array}{c} g \\ r\end{array}\right)
\end{align}
where
\begin{align}
  \label{eq:g-def}
  \begin{aligned}
    g &= \textstyle \beta\frac{x\tr y}{n}\one - x\circ y\\
    r &= Mx+q-y
  \end{aligned}
\end{align}
Here we have used $t=\beta\frac{x\tr y}{n}$.  Note also that $r=0$ for
any feasible $x,y$.

To pick a step length, we first define a vector $s$ and diagonal
matrix $S$ with 
\begin{align}
  \label{eq:s-def}
  s_i = S_{ii} = \sqrt{x_iy_i}
\end{align}
Write $s_0$ for the smallest element of $s$.  Like $g$, the vector
$s/s_0$ measures how far we are from the central path: if we are near
the central path, $s/s_0 \approx \one$, and we can afford to take a
relatively larger step, while if we are far from the central path,
some elements of $s/s_0$ will be large, and we will need to be more
cautious.

In particular, we will show below that the step size
\begin{align}
  \label{eq:step-length}
  \theta = \textstyle\frac{3}{7}\frac{s_0}{\|S^{-1}g\|}
\end{align}
guarantees that we maintain strict feasibility and decrease our
potential.  Note that $s>0$ for any strictly feasible $x,y$; and,
since $\beta < 1$, we have $g\neq 0$ for any strictly feasible $x,y$.
So,~(\ref{eq:step-length}) always yields a well-defined step length
$\theta$.
In practice, the value of $\theta$ from~(\ref{eq:step-length}) will
be conservative; but, it could serve as an initializer for a line
search (e.g.,~\cite[Alg.~9.2]{boyd-vandenberghe}) to determine a step
length that decreases the potential as much as possible.

\section{Proof of correctness}
\label{sec:proof}

We proceed to show that the potential-reduction algorithm behaves as
claimed above.  The proof follows Kojima et
al.~\cite{kojima-potential-reduction}, although our presentation is
somewhat different.

\begin{lemma}
  \label{lem:unique-step}
  Eq.~\ref{eq:update-direction} defines a unique step direction.
\end{lemma}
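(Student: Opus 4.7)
The plan is to reduce the $2n \times 2n$ block system to a single $n \times n$ linear system in $\Delta x$, then use monotonicity of $M$ together with strict feasibility to show the reduced coefficient matrix is nonsingular.

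First I would use the second block equation to solve for $\Delta y$ in terms of $\Delta x$: $\Delta y = r + M\Delta x$. Substituting this into the first block equation yields
\begin{align*}
(Y + XM)\Delta x = g - Xr.
\end{align*}
Since $x > 0$, the diagonal matrix $X$ is invertible, so this is equivalent to
\begin{align*}
(X^{-1}Y + M)\Delta x = X^{-1}g - r.
\end{align*}
Thus uniqueness of $(\Delta x, \Delta y)$ reduces to showing that $A := X^{-1}Y + M$ is nonsingular.

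For this I would invoke the standing hypothesis that $M$ is monotone, i.e.\ $v\tr M v \geq 0$ for every $v\in\Re^n$ (no symmetry required). Combined with the fact that $X^{-1}Y$ is diagonal with strictly positive entries $y_i/x_i$ (since $x,y > 0$ at a strictly feasible point), this gives, for any nonzero $v$,
\begin{align*}
v\tr A v = \sum_{i=1}^n \frac{y_i}{x_i} v_i^2 + v\tr M v > 0.
\end{align*}
Hence $Av \neq 0$ whenever $v\neq 0$, so $A$ is nonsingular. This determines $\Delta x$ uniquely, and then $\Delta y = r + M\Delta x$ is uniquely determined as well.

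There is essentially no obstacle here: the only subtlety worth flagging is that monotonicity only guarantees $v\tr M v \geq 0$ (not $Mv = 0 \Rightarrow v = 0$), so the argument must use the strict positivity of $X^{-1}Y$ to upgrade positive semidefiniteness of $M$ into strict positive definiteness of $A$, rather than appealing to invertibility of $M$ directly.
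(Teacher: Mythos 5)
Your proof is correct and follows essentially the same route as the paper: both eliminate one block variable to reduce to an $n\times n$ system with coefficient matrix $M+X^{-1}Y$ (up to sign), and both conclude nonsingularity from positive semidefiniteness of $M$ plus the strictly positive diagonal $X^{-1}Y$. The only difference is cosmetic (you substitute $\Delta y$ from the second block row into the first, while the paper subtracts $X^{-1}$ times the first row from the second), and your remark about needing the diagonal term to upgrade semidefiniteness to definiteness is exactly the point the paper relies on.
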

For a proof, see the appendix.

\begin{theorem}
  \label{thm:itworks}
  Suppose $n\geq 2$, and take the parameter $\kappa$ from our
  potential~(\ref{eq:pot}) to be $\kappa=\sqrt n$.  Suppose $x$ and
  $y$ are strictly feasible for the monotone LCP $(M,q)$.  Then the
  step $(\theta\Delta x, \theta\Delta y)$
  from~(\ref{eq:update-direction}--\ref{eq:step-length}), with $\beta
  = \frac{n}{n+\kappa}$, maintains strict feasibility and guarantees a
  potential reduction of at least:
  \begin{align*}
    \textstyle
    p_\kappa(x+\theta\Delta x, y+\theta\Delta y) \leq p_\kappa(x,y) - \frac{1}{5}
  \end{align*}
\end{theorem}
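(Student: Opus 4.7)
The plan is to expand $p_\kappa$ at the new iterate and bound each piece via standard logarithm inequalities. First I introduce the rescaled direction
\[
p = S^{-1}Y\Delta x, \qquad q = S^{-1}X\Delta y,
\]
so that $\Delta x_i/x_i = p_i/s_i$ and $\Delta y_i/y_i = q_i/s_i$. The first block of~(\ref{eq:update-direction}) becomes $p+q = S^{-1}g$, and the second block (with $r=0$) gives $\Delta y = M\Delta x$, whence
\[
p\tr q = (\Delta x)\tr\Delta y = (\Delta x)\tr M\,\Delta x \;\geq\; 0
\]
by positive semidefiniteness of $M$. In particular, $\|p\|^2+\|q\|^2\leq\|S^{-1}g\|^2$ and $|p_i|,|q_i|\leq\|S^{-1}g\|$.

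Strict feasibility is immediate: the choice $\theta=\tfrac{3}{7}s_0/\|S^{-1}g\|$ gives $|\theta\Delta x_i/x_i|,|\theta\Delta y_i/y_i|\leq\tfrac{3}{7}<1$, so $x+\theta\Delta x>0$ and $y+\theta\Delta y>0$; the equation $y=Mx+q$ is preserved because $\Delta y=M\Delta x$.

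For the potential change, expand
\[
p_\kappa(x+\theta\Delta x,y+\theta\Delta y)-p_\kappa(x,y)
\]
as one $\ln$-term for the gap and $2n$ $\ln$-terms for the barriers, then apply $\ln(1+u)\leq u$ to the former and $\ln(1+u)\geq u-u^2/(2(1-|u|))$ (which gives $\geq u-\tfrac{7}{8}u^2$ for $|u|\leq\tfrac{3}{7}$) to the latter. Using $\one\tr g=-\kappa\,x\tr y/(n+\kappa)$ (from $\beta=n/(n+\kappa)$) together with $p_i+q_i=g_i/s_i$, the $\theta^1$ coefficient collects to $-\bigl(\kappa+\sum_i g_i/s_i^2\bigr)$. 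Writing $\mu=x\tr y/n$ and substituting $g_i=\beta\mu-s_i^2$ yields the clean identity
\[
\kappa+\sum_{i=1}^n g_i/s_i^2 \;=\; \|S^{-1}g\|^2/(\beta\mu),
\]
which follows from $\|S^{-1}g\|^2=\beta^2\mu^2\sum 1/s_i^2-n\mu(2\beta-1)$ and the specific value of $\beta$. Thus the first-order reduction is exactly $\theta\|S^{-1}g\|^2/(\beta\mu)$.

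The $\theta^2$ contribution splits into a \emph{gap} part at most $\theta^2(n+\kappa)\|S^{-1}g\|^2/(2n\mu)$ (using $p\tr q\leq\|S^{-1}g\|^2/2$) and a \emph{barrier} part at most $\tfrac{7}{8}\theta^2\|S^{-1}g\|^2/s_0^2=\tfrac{9}{56}$. With $s_0^2\leq\mu$, plugging in $\theta$ bounds the total $\theta^2$ contribution by an absolute constant. It remains to check that the linear reduction $\tfrac{3}{7}s_0\|S^{-1}g\|/(\beta\mu)$ exceeds this constant by at least $\tfrac{1}{5}$. The main obstacle is this final numerical step: one must combine two independent lower bounds on $\|S^{-1}g\|$, namely $\|S^{-1}g\|\geq|s_0-\beta\mu/s_0|$ (dominant when $s_0\ll\sqrt{\beta\mu}$) and $\|S^{-1}g\|\geq\sqrt{n\mu}\,\kappa/(n+\kappa)$, obtained from the AM--HM inequality $\sum 1/s_i^2\geq n/\mu$ (dominant when $s_0$ is close to $\sqrt\mu$). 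Combining these with $\kappa=\sqrt n$ and $n\geq 2$ verifies that the $\tfrac{3}{7}$ in the step length is calibrated so that linear gain beats quadratic loss by $\tfrac{1}{5}$.
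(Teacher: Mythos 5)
Your argument tracks the paper's proof almost line for line: your $p,q$ are (up to notation) the paper's $Su,Sv$ with $u=X^{-1}\Delta x$, $v=Y^{-1}\Delta y$; your logarithm inequalities are its concavity bound and its Lemma~\ref{lem:ln-upper} with $\tau=7/4$; your ``clean identity'' for the first-order coefficient is its computation $\frac{n}{\beta x\tr y}g\tr(u+v)=\frac{\|S^{-1}g\|^2}{\beta\mu}$; and your two lower bounds on $\|S^{-1}g\|$ are exactly the two orthogonal components that its Lemma~\ref{lem:glen} combines. The feasibility argument and the linear term are correct. The genuine gap is in the quadratic term --- precisely the step you flag as ``the main obstacle'' and then assert goes through. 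It does not, with the bounds as you have set them up. By estimating the gap's second-order contribution via $p\tr q\leq\|S^{-1}g\|^2/2$ and, \emph{separately}, the barrier's via $\|p\|^2+\|q\|^2\leq\|S^{-1}g\|^2$, you pay for the cross term $\gamma=\Delta x\tr\Delta y=p\tr q\geq 0$ on both sides of the ledger. Test this at a point with all $s_i$ equal (so $s_0^2=\mu$) and $n=2$ (so $1/\beta=(2+\sqrt2)/2\approx1.707$): there $\|S^{-1}g\|=(1-\beta)\sqrt{n\mu}=\beta\sqrt\mu$ exactly, both of your lower bounds are saturated, the linear gain is exactly $\tfrac37\approx0.429$, and your quadratic loss is $\tfrac{9}{56}+\tfrac{9}{98\beta}\approx0.161+0.157=0.318$, yielding only $\Delta p\leq-0.111$. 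Even the sharper $p\tr q\leq\|S^{-1}g\|^2/4$ gives about $-0.19$. Neither reaches $-\tfrac15$, so the final numerical calibration you appeal to does not close.

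The missing idea is to keep the identity $\|p\|^2+\|q\|^2=\|S^{-1}g\|^2-2\gamma$ intact rather than discarding the $-2\gamma$. Then $\gamma$ enters the total quadratic bound with coefficient $\frac{1}{\beta\mu}-\frac{7}{4s_0^2}$, which is nonpositive because $s_0^2\leq\mu$ and $\frac74\geq\frac1\beta$ for $n\geq2$; since $\gamma\geq0$ the entire cross term can be dropped, leaving a quadratic loss of only $\frac{7}{8s_0^2}\theta^2\|S^{-1}g\|^2=\frac{9}{56}$. Combined with the single bound $\|S^{-1}g\|\geq\frac{\sqrt3}{2}\frac{\beta\mu}{s_0}$ --- obtained by writing $S^{-1}g=\beta(\mu\xi-s)-(1-\beta)s$ with $\xi_i=1/s_i$, noting the two pieces are orthogonal, and completing a square --- the linear gain is at least $\frac37\cdot\frac{\sqrt3}{2}\approx0.371$, and $\frac{9}{56}-\frac{3\sqrt3}{14}<-\frac15$. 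Without this cancellation of $\gamma$ your constants cannot be rescued by any recombination of your two lower bounds, since both are tight at the counterexample point above. (A cosmetic issue: you reuse $q$ for both the LCP offset vector and the scaled direction $S^{-1}X\Delta y$.)
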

\begin{proofof}{Thm.~\ref{thm:itworks}}
  To make notation simpler, we will change variables to $u =
  X^{-1}\Delta x$ and $v = Y^{-1}\Delta y$.  With this notation, and
  using~(\ref{eq:s-def}), the first row of~(\ref{eq:update-direction})
  becomes
  \begin{align}
    \label{eq:wuvg}
    S^2(u+v) = g
  \end{align}
  Our goal is now to bound the change in potential
  \[
  \Delta p = p_\kappa(x+\theta\Delta x,y+\theta\Delta y) - p_\kappa(x,y)
  \]
  We start by splitting $p_\kappa(x,y)$ into
  two pieces, so that we can bound each piece separately:
  \begin{align*}
    p_\kappa(x,y) &= \bar p_1(x,y)+ \bar p_2(x,y)\\
    \bar p_1(x,y) &= \frac{n}{\beta}\ln x\tr y\\
    \bar p_2(x,y) &= - \sum_{i=1}^n \ln x_i - \sum_{i=1}^n \ln y_i
  \end{align*}
  Here we have used $n+\kappa=\frac{n}{\beta}$, from the assumed value
  of $\beta$.
  Note that $\bar p_2$ is convex, but $\bar p_1$ is not (due to the concave
  $\ln$ function and the interaction $x\tr y$).  Write
  \begin{align*}
    \Delta\bar p_1 &= \bar p_1(x+\theta\Delta x, y+\theta\Delta y) - \bar p_1(x,y)\\
    \Delta\bar p_2 &= \bar p_2(x+\theta\Delta x, y+\theta\Delta y) - \bar p_2(x,y)
  \end{align*}
  To upper bound $\Delta\bar p_1$ we will use the identity 
  \[ \ln(z+\Delta z)\leq \ln z + \Delta z/z\]
  which holds since $\ln(z)$ is concave in $z$.
  We take $z=x\tr y$, so
  \begin{align*}
    \Delta z &= (x+\theta\Delta x)\tr (y+\theta\Delta y)-x\tr y\\
    &= \theta\Delta x\tr y + \theta x\tr\Delta y + \theta^2\Delta
    x\tr\Delta y
  \end{align*}
  Therefore,
  \begin{align*}
    \textstyle
    \Delta \bar p_1 \leq \frac{n}{\beta x\tr
      y}[\theta\Delta x\tr y + \theta x\tr\Delta y + \theta^2\Delta
    x\tr\Delta y] 
  \end{align*}
  or, in terms of $u$ and $v$,
  \begin{align*}
    \textstyle
    \Delta \bar p_1 \leq \frac{n}{\beta x\tr
      y}[\theta (x\circ y)\tr(u+v) + \theta^2\gamma] 
  \end{align*}
  where we have written $\gamma=\Delta x\tr\Delta y=u\tr XY v$.  Note
  that $\gamma\geq 0$, since $\gamma=\Delta x\tr\Delta y = \Delta x\tr
  M \Delta x$ and $M$ is positive semidefinite.

  For $\Delta\bar p_2$ we use a local upper bound on $-\ln(z+\Delta
  z)$, derived from the second-order Taylor approximation
  \begin{align*}
    -\ln(z+\Delta z) &
    \textstyle
    \approx -\ln z - \Delta z/z + \frac{1}{2}(\Delta z)^2/z^2
  \end{align*}
  To get a bound valid for some range of $\Delta z$, we scale up the
  second derivative by a factor $\tau\geq 1$:
  \begin{lemma}
    \label{lem:ln-upper}
    For $\tau \geq 1$, if $\frac{\Delta z}{z}\geq \frac{1-\tau}{\tau}$, then
    \begin{align}
      \label{eq:ln-upper}
      -\ln(z+\Delta z) &
      \textstyle
      \leq -\ln z - \Delta z/z + \frac{\tau}{2}(\Delta z)^2/z^2
    \end{align}
  \end{lemma}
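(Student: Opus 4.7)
The plan is to reduce the inequality to a one-variable statement by scaling. Set $w = \Delta z/z$ (the hypothesis makes sense only for $z > 0$, so $\ln(z+\Delta z) - \ln z = \ln(1+w)$), and observe that the claimed inequality is equivalent to
$$-\ln(1+w) \leq -w + \frac{\tau}{2} w^2$$
for all $w \geq (1-\tau)/\tau = 1/\tau - 1$. Since $\tau \geq 1$ is finite, this hypothesis gives $1 + w \geq 1/\tau > 0$, so the logarithm is well-defined throughout the relevant range.

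Next I would define the slack $f(w) = -\ln(1+w) + w - \tfrac{\tau}{2} w^2$ and show $f(w) \leq 0$ on $[1/\tau - 1, \infty)$. Direct computation gives $f(0) = 0$ and
$$f'(w) = -\frac{1}{1+w} + 1 - \tau w = \frac{w}{1+w} - \tau w = w\left(\frac{1}{1+w} - \tau\right).$$

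The two sign-analysis cases finish the proof. For $w \geq 0$, the bracket $1/(1+w) - \tau$ is nonpositive since $1/(1+w) \leq 1 \leq \tau$, and $w \geq 0$, so $f'(w) \leq 0$; hence $f$ is nonincreasing on $[0, \infty)$ and $f(w) \leq f(0) = 0$. For $1/\tau - 1 \leq w \leq 0$, the hypothesis gives $1+w \geq 1/\tau$, i.e., $1/(1+w) \leq \tau$, so the bracket is again nonpositive; combined with $w \leq 0$ this yields $f'(w) \geq 0$, so $f$ is nondecreasing on this interval and again $f(w) \leq f(0) = 0$. Thus $f \leq 0$ throughout $[1/\tau - 1, \infty)$, which is exactly the bound (\ref{eq:ln-upper}).

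Honestly, this is a routine one-variable calculus fact, so there is no real obstacle. The only thing worth being careful about is matching the endpoint $1/\tau - 1$ with the domain of the logarithm and seeing that the sign analysis of $f'$ pivots exactly at $w = 0$ in both directions; the condition $\tau \geq 1$ is precisely what ensures the bracket $1/(1+w) - \tau$ has the right sign on both sides of zero under the stated constraint on $w$.
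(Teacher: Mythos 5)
Your proof is correct and is essentially the same argument as the paper's: both exploit that the two sides agree to first order at $\Delta z=0$ and then compare derivatives on either side, with the hypothesis $\frac{\Delta z}{z}\geq\frac{1-\tau}{\tau}$ entering exactly to control the sign for negative $\Delta z$. Normalizing to $w=\Delta z/z$ and phrasing it as monotonicity of the slack $f$ is a cosmetic (and slightly cleaner) repackaging of the paper's integration of the second-derivative bound.
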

  (See the appendix for a proof.)  So, using Lemma~\ref{lem:ln-upper}
  $2n$ times (first with $z=x_i$ and $\Delta z = \theta\Delta x_i$,
  and then with $z=y_i$ and $\Delta z = \theta\Delta y_i$), we have
  \begin{align*}
    \Delta\bar p_2
    \leq&\ - \theta\sum_{i=1}^n \Delta x_i/x_i  - \theta\sum_{i=1}^n
    \Delta y_i/y_i\\ 
    &{}+\theta^2\frac{\tau}{2}\sum_{i=1}^n(\Delta
    x_i)^2/x_i^2+\theta^2\frac{\tau}{2}\sum_{i=1}^n(\Delta y_i)^2/y_i^2 
  \end{align*}
  so long as $\theta\Delta x_i/x_i \geq \frac{1-\tau}{\tau}$ and
  $\theta\Delta y_i/y_i \geq \frac{1-\tau}{\tau}$ for all $i$.  Or, in
  terms of $u$ and $v$,
  \begin{align*}
    \Delta\bar p_2
    &\leq \textstyle - \theta\one\tr (u+v) + \frac{\tau}{2}\theta^2
    (u\tr u + v\tr v) 
  \end{align*}
  as long as 
  \begin{align}
    \label{eq:tau-constr}
    \textstyle
    \theta u\geq \frac{1-\tau}{\tau}\one \qquad \theta v\geq
    \frac{1-\tau}{\tau}\one
  \end{align}
  Combining the bounds on $\Delta\bar p_1$ and $\Delta\bar p_2$, we
  have
  \begin{align}
    \label{eq:pot-diff}
    \Delta p
    &\leq \textstyle 
    \frac{n}{\beta x\tr y} 
    [\theta (x\circ y)\tr(u+v) + \theta^2\gamma]\\
    \nonumber
    &\quad \textstyle {}- \theta\one\tr (u+v)
    +\frac{\tau}{2}\theta^2(u\tr u + v\tr v)
  \end{align}
  as long as~(\ref{eq:tau-constr}) holds.
  Using the definition~(\ref{eq:g-def}) of $g$, we can split the
  right-hand side of~(\ref{eq:pot-diff}) into a term that is linear in
  $\theta$:
  \[
  \textstyle
  -\frac{n}{\beta x\tr y}\theta g\tr(u+v)
  \]
  and a term that is quadratic in $\theta$:
  \[
  \textstyle
  \theta^2[\frac{n}{\beta x\tr y}\gamma +
  \frac{\tau}{2}(u\tr u+v\tr v)]
  \]
  We can simplify each of these terms separately:
  using~(\ref{eq:wuvg}), we have
  \[
  g\tr(u+v)=g\tr S^{-2}g = \|S^{-1}g\|^2
  \]
  And, 
  \begin{align*}
    u\tr u + v\tr v &= \|S^{-1}Su\|^2 + \|S^{-1}Sv\|^2\\
    &\leq \textstyle\frac{1}{s_0^2}(\|Su\|^2 + \|Sv\|^2)\\
    &=\textstyle\frac{1}{s_0^2}(\|S(u+v)\|^2-2\gamma)\\
    &= \textstyle\frac{1}{s_0^2}(\|S^{-1}g\|^2-2\gamma)
  \end{align*}
  (The second line holds by definition of $s_0$; the third
  uses the definition of $\gamma$; and the last uses~(\ref{eq:wuvg})
  again.)

  So,~(\ref{eq:pot-diff}) becomes
  \begin{align}
    \label{eq:pot-diff2}
    \Delta p &\leq \textstyle
    \theta^2[(\frac{n}{\beta x\tr y}-\frac{\tau}{s_0^2})\gamma + 
    \frac{\tau}{2s_0^2}\|S^{-1}g\|^2]\\
    &\quad \textstyle {}-\theta\frac{n}{\beta x\tr y}\|S^{-1}g\|^2 \nonumber
  \end{align}
  as long as~(\ref{eq:tau-constr}) holds.  In Lemma~\ref{lem:ln-upper}, we
  are free to choose $\tau\geq 1$; so, we will assume
  \begin{align}
    \label{eq:tau-constr2}
    \tau \geq \frac{1}{\beta}
  \end{align}
  So, since
  \[
  \textstyle \frac{x\tr y}{n} = \frac{s\tr s}{n} \geq s_0^2
  \]
  we have that $(\frac{n}{\beta x\tr y}-\frac{\tau}{s_0^2})\gamma\leq
  0$, and~(\ref{eq:pot-diff2}) becomes
  \begin{align}
    \label{eq:pot-diff3}
    \Delta p &\leq \textstyle
    \theta^2
    \frac{\tau}{2s_0^2}\|S^{-1}g\|^2 
    -\theta\frac{n}{\beta x\tr y}\|S^{-1}g\|^2 
  \end{align}
  as long as~(\ref{eq:tau-constr}) and~(\ref{eq:tau-constr2}) hold.  The
  right-hand side will be negative for the optimal $\theta>0$, since
  its derivative with respect to $\theta$ is negative at $\theta=0$.
  So, we now know that $\Delta p < 0$, i.e., $(\Delta x,\Delta y)$ is
  a descent direction for $p$ as desired.

  To determine how large a decrease in potential we can achieve, we
  need to pick a feasible step size $\theta$.  To ensure that we
  satisfy~(\ref{eq:tau-constr}), we will enforce the stricter
  constraints
  \begin{align}
    \label{eq:step-constr}
    \textstyle
    \|\theta u\|_\infty\leq\frac{\tau-1}{\tau}\qquad
    \|\theta v\|_\infty\leq\frac{\tau-1}{\tau}
  \end{align}
  Note that~(\ref{eq:step-constr}) implies that our step maintains
  strict feasibility: since $\frac{\tau-1}{\tau}<1$, we have $\|\theta
  u\|_\infty = \|\theta X^{-1}\Delta x\|_\infty < 1$ and $\|\theta
  v\|_\infty = \|\theta Y^{-1}\Delta y\|_\infty< 1$.  Now,
  \begin{align*}
    \|u\|_\infty &\leq \|u\|\\
    &= \|S^{-1}Su\|\\
    &\leq \textstyle\frac{1}{s_0} \|Su\|\\
    &\leq \textstyle\frac{1}{s_0} (\|Su\|+\|Sv\|+2\gamma)\\
    &= \textstyle\frac{1}{s_0} \|Su+Sv\|\\
    &= \textstyle\frac{1}{s_0} \|S^{-1}g\|
  \end{align*}
  Analogously,
  \begin{align*}
    \|v\|_\infty &\leq \textstyle\frac{1}{s_0} \|S^{-1}g\|
  \end{align*}
  So,~(\ref{eq:step-constr}) will be satisfied if we take
  \begin{align}
    \label{eq:theta}
    \theta &= \textstyle\frac{\tau-1}{\tau}\frac{s_0}{\|S^{-1}g\|}
  \end{align}
  Substituting into~(\ref{eq:pot-diff3}), we have
  \begin{align}
    \Delta p &\leq 
    \textstyle\frac{(\tau-1)^2}{\tau^2}\frac{s_0^2}{\|S^{-1}g\|^2}\frac{\tau}{2s_0^2}\|S^{-1}g\|^2
    \nonumber \\
    & \textstyle \quad {} -\frac{\tau-1}{\tau}\frac{s_0}{\|S^{-1}g\|}\frac{n}{\beta
      x\tr y}\|S^{-1}g\|^2\nonumber \\
    &= \textstyle\frac{1}{2}\frac{(\tau-1)^2}{\tau}
    -\frac{\tau-1}{\tau}\frac{ns_0}{\beta s\tr
      s}\|S^{-1}g\| \label{eq:pot-diff4}
  \end{align}
  as long as~(\ref{eq:tau-constr2}) holds.
  Finally, we lower-bound $\|S^{-1}g\|$ with the following lemma,
  whose proof is in the appendix:
  \begin{lemma}
    \label{lem:glen}
    For $g$ in~(\ref{eq:g-def}), if $\beta=\frac{n}{n+\sqrt n}$, then:
    \[\textstyle \|S^{-1}g\| \geq \frac{\sqrt{3}}{2}\frac{s\tr
      s}{n}\frac{\beta}{s_0} \]
  \end{lemma}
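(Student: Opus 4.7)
The plan is to expand $\|S^{-1}g\|^2$ explicitly, lower-bound $\sum_i 1/s_i^2$ by isolating the minimum-index term and applying Cauchy--Schwarz to the rest, and then use the specific value of $\beta$ to collapse everything to a one-variable convex inequality that can be checked directly.

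Writing $\mu := \beta s\tr s/n$ so that $g_i = \mu - s_i^2$, direct expansion gives $\|S^{-1}g\|^2 = \mu^2 \sum_i 1/s_i^2 - 2n\mu + s\tr s$. For the sum, the $i=0$ term contributes $1/s_0^2$, and Cauchy--Schwarz in the form $(n-1)^2 = \bigl(\sum_{i \neq 0} s_i \cdot s_i^{-1}\bigr)^2 \leq \bigl(\sum_{i \neq 0} s_i^2\bigr)\bigl(\sum_{i \neq 0} s_i^{-2}\bigr)$ gives $\sum_{i \neq 0} 1/s_i^2 \geq (n-1)^2/(s\tr s - s_0^2)$. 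The choice $\beta = n/(n+\sqrt n)$ supplies the identity $n(1-\beta)^2 = \beta^2$, equivalently $n - 2n\beta = -\beta^2(n-1)$. Letting $\sigma := s\tr s/n$ and $w := s_0^2/\sigma \in (0, 1]$ and substituting, the claim $\|S^{-1}g\|^2 \geq (3/4)\mu^2/s_0^2$ reduces, after dividing through by $\beta^2\sigma$, to the single scalar inequality
\[ h(w) := \frac{1}{4w} + \frac{(n-1)^2}{n-w} - (n-1) \geq 0 \quad\text{for } w \in (0, 1]. \]

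The function $h$ is strictly convex on $(0, n)$, so its minimum is at the unique root of $h'(w) = -1/(4w^2) + (n-1)^2/(n-w)^2$, namely $w_\ast = n/(2n-1)$, which lies in $(1/2, 1]$ for every $n \geq 2$. A short calculation gives $h(w_\ast) = 1/(4n) > 0$, so $h \geq 1/(4n)$ on the whole interval. I expect the main obstacle to be the algebraic bookkeeping that the second paragraph compresses: the constant $3/4$ in the target bound and the specific value $\beta = n/(n+\sqrt n)$ are tuned so that the residual from $\mu^2/s_0^2 - (3/4)\mu^2/s_0^2$ lines up exactly with the $-2n\mu + s\tr s$ correction, leaving only the clean convex inequality $h(w) \geq 0$; with slightly different constants an uncancelled linear-in-$w$ term would survive and could be negative somewhere in $(0,1]$.
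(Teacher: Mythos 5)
Your proof is correct, and it reaches the bound by a genuinely different route than the paper. The paper's argument is algebraic and calculus-free: writing $\xi=S^{-1}\one$ and $\mu=s\tr s/n$, it decomposes $S^{-1}g=\beta(\mu\xi-s)-(1-\beta)s$, notes the two pieces are orthogonal because $(\mu\xi-s)\tr s=n\mu-s\tr s=0$, keeps only the $s_0$-coordinate of $\|\mu\xi-s\|^2$, applies the same identity $n(1-\beta)^2=\beta^2$ you use, and finishes by completing the square via $\mu^2-\mu s_0^2+s_0^4=\tfrac34\mu^2+(\tfrac12\mu-s_0^2)^2$. You instead expand $\|S^{-1}g\|^2$ directly, retain the non-minimal coordinates through Cauchy--Schwarz rather than discarding them, and minimize a one-variable convex function. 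The two proofs throw away different slack: the paper drops $\sum_{i\neq 0}(\mu/s_i-s_i)^2$ entirely and then drops $(\tfrac12\mu-s_0^2)^2$, whereas your Cauchy--Schwarz step keeps a remnant of the off-minimum mass and yields the marginally stronger conclusion $h(w)\geq\frac{1}{4n}$, i.e.\ $\|S^{-1}g\|^2\geq\frac34\mu^2\beta^2/s_0^2+\beta^2 s\tr s/(4n^2)$. The paper's version is shorter and avoids the derivative computation; yours makes visible exactly where the constant $\frac34$ comes from (the minimum of $h$ sits at $w_*=n/(2n-1)$) and shows it is not tight. One small caveat: your Cauchy--Schwarz step divides by $s\tr s-s_0^2$, which is positive only when $n\geq 2$; that hypothesis is supplied by the surrounding theorem but is worth stating explicitly, since the paper's proof of the lemma does not need it.
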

  Substituting Lemma~\ref{lem:glen} into~(\ref{eq:pot-diff4}), we have
  \begin{align*}
    \Delta p &\leq \textstyle\frac{1}{2}\frac{(\tau-1)^2}{\tau}
    -\frac{\tau-1}{\tau}\frac{\sqrt{3}}{2}
  \end{align*}
  In particular, if we take $\tau=\frac{7}{4}$, we
  satisfy~(\ref{eq:tau-constr2}): $n\geq 2$, so $\frac{1}{\beta}\leq
  \frac{2+\sqrt{2}}{2} \approx 1.707$.  And, we have
  \begin{align*}
    \Delta p &\leq \textstyle\frac{1}{2}\frac{9}{16}\frac{4}{7}
    -\frac{3}{7}\frac{\sqrt{3}}{2} \leq -\frac{1}{5}
  \end{align*}
  as claimed.  This value of $\tau$, together with~(\ref{eq:theta}),
  yields~(\ref{eq:step-length}).
\end{proofof}

\section{Algorithm for projective LCPs}

The main work in each iteration of the potential-reduction algorithm
is to compute the Newton direction~(\ref{eq:update-direction}), which
requires solving an $n\times n$ system of linear equations.  (The
system~(\ref{eq:update-direction}) as a whole is $2n\times 2n$, but we
can use the sparsity of the three diagonal blocks to eliminate cheaply
down to an $n\times n$ system.)  The work required to solve this
$n\times n$ system can vary greatly, depending on the structure of
$M$, but is often prohibitive for large $n$.

So, in the projective case ($M=\Phi U+\Pi_0$
and $\Pi_0=I-\Phi\Phi^\dag$, where $\Phi$ has $k<n$ columns), we want
to avoid solving an $n\times n$ system at all; instead we will
construct and solve only a smaller $k\times k$ system.  Constructing
the $k\times k$ system will then be the main work in each iteration,
at $O(nk^2)$ flops.  (Solving the $k\times k$ system takes at most
$O(k^3)$ flops even if we just use simple Gaussian elimination.)

To run our potential-reduction algorithm on a projective LCP, our
basic idea (as discussed in Sec.~\ref{sec:rank}) is to keep track of
$w$ such that $x-y+q=\Phi w$, and do as many calculations as possible
in terms of $w$ instead of $x$ and $y$.  Fig.~\ref{fig:approx-uip}
summarizes the resulting algorithm.  (In fact it is not even necessary
to keep track of $w$ explicitly, but Fig.~\ref{fig:approx-uip} makes
$w$ explicit for clarity.)  For convenience we assume that $\Phi$ has
full column rank; if not, we can drop some columns from $\Phi$ and
adjust $U$ accordingly.

\begin{figure}
\rule[1ex]{\columnwidth}{0.5pt}
\begin{flushleft}
  In: $\Phi$, $U$, $q$; strictly feasible $x,y$; $\beta\in(0,1)$;
  $\epsilon>0$.\\
  Out: strictly feasible $x, y$ with $x\tr y\leq\epsilon$.
\end{flushleft}
\begin{centering}
  \begin{enumerate}
    \setlength{\itemsep}{.5ex}
    \setlength{\parskip}{0pt}
  \item Set $w \leftarrow \Phi^\dag(x-y+q)$.
  \item Stop if $x\tr y\leq\epsilon$.
  \item Compute $g$ and $r$ from~(\ref{eq:g-def})
    and~(\ref{eq:fast-r}).
  \item Compute $G$ and $h$ via~(\ref{eq:lo-d-G}--\ref{eq:lo-d-h}).
  \item Solve $G\Delta w = h$ for $\Delta w$.
  \item Solve $(X+Y) \Delta y = g - Y\Phi \Delta w$ for $\Delta y$.
  \item Compute $\Delta x = \Delta y + \Phi\Delta w$.
  \item Choose a step length $\theta\geq 0$ by~(\ref{eq:step-length}).
  \item Set $(x,y,w) \leftarrow (x,y,w) + \theta (\Delta x, \Delta y,
    \Delta w)$.
  \item Repeat from step 2.
\end{enumerate}
\rule[2ex]{\columnwidth}{0.5pt}
\vspace{-3ex}
\end{centering}
\caption{Potential reduction for projective LCPs.}
\label{fig:approx-uip}
\end{figure}

We are given a strictly feasible pair $x,y$ to start, so our initial
$w$ is just $\Phi^\dag(x-y+q)$; we then have $x-y+q=\Phi w$ by
Lemma~\ref{lem:inrange}.  We update $w$ by adjoining the equation
$\Delta x - \Delta y = \Phi \Delta w$ to the
system~(\ref{eq:update-direction}).  With this extra
constraint,~(\ref{eq:update-direction}) becomes:
\begin{equation}
\label{eq:approx-direction}
\left(
\begin{array}{ccc}
Y & X & 0\\
-M & I & 0\\
-I & I & \Phi
\end{array}
\right)\left(
\begin{array}{c}
\Delta x\\
\Delta y\\
\Delta w
\end{array}
\right) = \left(
\begin{array}{c}
g\\
r\\
0
\end{array}
\right)
\end{equation}
Note that the extra constraint does not change the sequence of points
$(x,y)$ visited by our potential reduction algorithm: its only
effect is to allow us to track $w$ and
solve~(\ref{eq:approx-direction}) efficiently.  

To solve~(\ref{eq:approx-direction}) efficiently, we will run several
steps of block Gaussian elimination analytically.  First use the last
block row of~(\ref{eq:approx-direction}) to eliminate the first block
column:
\begin{equation}
\label{eq:first-elim}
\left(
\begin{array}{ccc}
 X+Y & Y\Phi\\
 I-M & -M\Phi\\
\end{array}
\right)\left(
\begin{array}{c}
\Delta y\\
\Delta w
\end{array}
\right) = \left(
\begin{array}{c}
g
\\
r
\end{array}
\right)
\end{equation}
Then note that strict feasibility implies that $X+Y$ is nonsingular.
So, we can use the first block row of~(\ref{eq:first-elim}) to
eliminate the first block column:
\begin{equation}
\label{eq:second-elim}
\begin{array}{r}
[(M-I)(X+Y)^{-1} Y\Phi-M\Phi]
\Delta w
 = \qquad \\[.5ex]
(M-I)(X+Y)^{-1}g + r
\end{array}
\end{equation}
Finally, we can left-multiply~(\ref{eq:second-elim}) by $\Phi\tr$ to
reduce to
\begin{equation}
\label{eq:third-elim}
G \Delta w = h
\end{equation}
where
\begin{align}
G &= \Phi\tr [(M - I)(X+Y)^{-1} Y - M]\Phi \nonumber \\
&= (\Phi\tr \Phi U - \Phi\tr)(X+Y)^{-1} Y\Phi - \Phi\tr (\Phi U)\Phi \label{eq:lo-d-G} \\
h &= \Phi\tr [(M-I)(X+Y)^{-1}g + r] \nonumber \\
&=  (\Phi\tr\Phi U-\Phi\tr)(X+Y)^{-1}g + \Phi\tr r \label{eq:lo-d-h}
\end{align}
Eqs.~\ref{eq:lo-d-G}--\ref{eq:lo-d-h} show how to build $G$ and $h$ in
time $O(nk^2)$, starting from $x$, $y$, $r$, $g$, $\Phi\tr\Phi U$,
and $\Phi$, which together require $O(nk)$ storage.  The vectors $r$ and $g$
can be calculated efficiently using~(\ref{eq:g-def}) and
the representation $M = \Phi U + I - \Phi\Phi^\dag$: we compute 
\begin{align}
\label{eq:fast-r}
Mx = \Phi (Ux - \Phi^\dag x) + x
\end{align}
For the term $\Phi^\dag x$, it may help to precompute a factorization
such as the $QR$ decomposition of $\Phi$.

We can then
solve~(\ref{eq:third-elim}) for $\Delta w$ in time $O(k^3)$ or better.
(Lemma~\ref{lem:Ginv}, whose proof is in the appendix, ensures that
$\Delta w$ is uniquely determined.)
Since $X+Y$ is nonsingular, we can use the first block row
of~(\ref{eq:first-elim}) to solve for $\Delta y$ in time $O(nk)$.
Finally, we can use the last block row of~(\ref{eq:approx-direction})
to solve for $\Delta x$ in time $O(nk)$.

\begin{lemma}
  \label{lem:Ginv}
  If $x,y>0$, $\Phi$ has full column rank, and $\Phi U$ is positive
  semidefinite, then the matrix $G$ defined in~(\ref{eq:lo-d-G}) is
  invertible.
\end{lemma}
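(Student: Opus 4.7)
The plan is to simplify $G$ algebraically, reduce invertibility of $G$ to invertibility of a related $n\times n$ matrix, and dispatch the latter with a standard positive-definiteness argument.

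First, I would rewrite $G$ in a more convenient form. Using $\Phi\tr \Pi_0 = 0$ (hence $\Phi\tr M = \Phi\tr \Phi U$) together with the positive diagonal matrices $D=(X+Y)^{-1}Y$ and $\bar D = (X+Y)^{-1}X$ (which sum to $I$, and each have entries in $(0,1)$), the expansion $(M-I)(X+Y)^{-1}Y - M = M(D-I) - D = -M\bar D - D$ gives
\[ G = -\Phi\tr(M\bar D + D)\Phi. \]

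Next, suppose $Gw = 0$. Set $z = \Phi w$ and $q = (M\bar D + D)z$, so $\Phi\tr q = 0$; equivalently, the range-of-$\Phi$ component $\Pi q$ vanishes (where $\Pi := I - \Pi_0 = \Phi\Phi^\dag$). The crux is to show that $\Pi_0 q$ also vanishes, so that $q = 0$. Using the identity $\Pi_0 M = \Pi_0$ (which follows from $\Pi_0\Phi = 0$ and $\Pi_0^2 = \Pi_0$, exactly as in the proof of Lemma~\ref{lem:inrange}),
\[ \Pi_0 q = \Pi_0 M \bar D z + \Pi_0 D z = \Pi_0 \bar D z + \Pi_0 D z = \Pi_0(\bar D + D)z = \Pi_0 z, \]
and since $z = \Phi w$ lies in the range of $\Phi$, $\Pi_0 z = 0$. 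Hence $q = 0$, i.e., $(M\bar D + D)z = 0$.

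Finally, I would show that $M\bar D + D$ is itself nonsingular on $\Re^n$. Factor as $(M + D\bar D^{-1})\bar D$; since $\bar D$ is invertible, it suffices to show $M + E$ is nonsingular, where $E := D\bar D^{-1}$. The diagonal matrix $E$ has strictly positive entries, so for any $v\neq 0$, $v\tr(M+E)v = v\tr M v + v\tr E v > 0$, using monotonicity of $M = \Phi U + \Pi_0$ (both summands are PSD, and $v\tr E v > 0$). Thus $M+E$, and hence $M\bar D + D$, is nonsingular, forcing $z = 0$ and therefore $w = 0$ (since $\Phi$ has full column rank).

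The main obstacle is the step $\Pi_0 q = 0$: the assumption $Gw = 0$ directly yields only $\Pi q = 0$, so a priori the $\Pi_0$-component of $q$ could be nonzero. The trick is to exploit $\Pi_0 M = \Pi_0$ together with $z \in \mathrm{range}(\Phi)$, which collapses $\Pi_0 q$ to $\Pi_0 z = 0$; once that is in hand, the reduction to invertibility of the $n\times n$ matrix $M\bar D + D$ is immediate, and the latter follows routinely from monotonicity via the factoring above.
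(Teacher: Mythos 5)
Your proof is correct, but it takes a genuinely different route from the paper's. Both arguments begin with the same simplification $-G=\Phi\tr(M\bar D+D)\Phi$ with $\bar D=(X+Y)^{-1}X$ (the paper writes the middle factor as $\Phi U(I-D)+D$ after replacing $\Phi\tr M$ by $\Phi\tr\Phi U$). From there the paper argues \emph{definiteness}: it claims $\Phi U(I-D)$ is positive semidefinite via a similarity transform to $(I-D)^{1/2}\Phi U(I-D)^{1/2}$, adds the positive definite $D$, and concludes $-G\succ 0$. You instead argue \emph{injectivity}: given $Gw=0$ you form the vector $(M\bar D+D)\Phi w$, kill its range-of-$\Phi$ component from $\Phi\tr(\cdot)=0$ and its nullspace-of-$\Phi\tr$ component from the identity $\Pi_0 M=\Pi_0$ together with $\bar D+D=I$ and $\Pi_0\Phi=0$, and then invoke nonsingularity of the full $n\times n$ matrix $M\bar D+D=(M+D\bar D^{-1})\bar D$, which follows cleanly from $v\tr Mv\ge 0$ and $v\tr D\bar D^{-1}v>0$. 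The extra step showing the $\Pi_0$-component vanishes is essential --- invertibility of $A$ alone does not give invertibility of $\Phi\tr A\Phi$ --- and it is exactly what lets you keep the quadratic-form argument on $M$ itself rather than on the product $M\bar D$. This buys you something real: your argument never needs the nonsymmetric product $\Phi U(I-D)$ to be PSD as a quadratic form, a point where the paper's eigenvalue-based justification is delicate (eigenvalue nonnegativity of a nonsymmetric matrix does not imply $v\tr Av\ge 0$). Two cosmetic remarks: you reuse the symbol $q$, which collides with the LCP data vector $q$ in~(\ref{eq:lcp}), so rename it; and you should note explicitly at the end that $G$ is square ($k\times k$), so injectivity implies invertibility.
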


Since $k<n$, the total time per iteration is $O(nk^2)$, as claimed
earlier---potentially substantially faster than an iteration of the
potential reduction method on an arbitrary monotone LCP\@.  Since we
are performing the exact same sequence of updates to $x$ and $y$ as
the general potential-reduction algorithm running on $(M,q)$, our
bounds from Sec.~\ref{sec:proof} continue to hold: we take the same
number of iterations and reach the same final error level.  So, we
have proven:

\begin{theorem}
  Suppose $n\geq 2$, and take the parameter $\kappa$ from our
  potential~(\ref{eq:pot}) to be $\kappa=\sqrt n$.  Suppose $x$ and
  $y$ are strictly feasible for the monotone projective LCP $(M,q)$,
  where $M=\Phi U + \Pi_0$, $\Pi_0=I-\Phi\Phi^\dag$, and $\Phi$ has
  full column rank.  Then the algorithm of Fig.~\ref{fig:approx-uip},
  with $\beta = \frac{n}{n+\kappa}$, maintains strict feasibility and
  guarantees a potential reduction of at least $\frac{1}{5}$ per step.
\end{theorem}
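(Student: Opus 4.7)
The strategy is to reduce the theorem to Theorem~\ref{thm:itworks} by showing that the projective algorithm of Fig.~\ref{fig:approx-uip} produces exactly the same sequence of $(x,y)$ iterates as the general algorithm of Fig.~\ref{fig:uip} on $(M,q)$. Once that equivalence is established, both strict feasibility and the $\frac{1}{5}$ potential decrease per step are inherited directly from Theorem~\ref{thm:itworks}; there is no new analytic content to prove.

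The first step is to check the loop invariant $x-y+q = \Phi w$. By Lemma~\ref{lem:inrange} and step~1 of Fig.~\ref{fig:approx-uip} it holds initially, and the identity $\Delta x - \Delta y = \Phi \Delta w$ built into~(\ref{eq:approx-direction}) and executed in steps~6--7 preserves it. Because the Newton step scales the residual $r = Mx+q-y$ by $(1-\theta)$, the strict feasibility maintained by Theorem~\ref{thm:itworks} also keeps $r = 0$ throughout. Combining these, for the $(\Delta x, \Delta y)$ uniquely determined by the original system~(\ref{eq:update-direction}) I have $\Delta x - \Delta y = (I-M)\Delta x = \Phi(\Phi^\dag - U)\Delta x$, which lies in the range of $\Phi$. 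So appending the third row $(-I,\, I,\, \Phi)$ to form~(\ref{eq:approx-direction}) is consistent, does not alter $(\Delta x, \Delta y)$, and pins down $\Delta w = \Phi^\dag(\Delta x - \Delta y)$ uniquely.

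The remaining step is to verify that the block Gaussian elimination carried out between equations~(\ref{eq:approx-direction}) and~(\ref{eq:third-elim}), together with the closed forms~(\ref{eq:lo-d-G})--(\ref{eq:lo-d-h}), faithfully computes this solution. Strict feasibility makes the positive diagonal $X+Y$ invertible, Lemma~\ref{lem:Ginv} makes $G$ invertible, so step~5 recovers the same $\Delta w$ as above; steps~6 and~7 are exact back-substitutions that yield the same $\Delta y$ and $\Delta x$ as the original Newton system; and step~8 selects the same $\theta$ via~(\ref{eq:step-length}). The iterate sequences therefore agree, and Theorem~\ref{thm:itworks} closes the proof. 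The only nontrivial point is the consistency argument of the previous paragraph --- that the extra row added to the Newton system does not conflict with it --- which is exactly where Lemma~\ref{lem:inrange} applied to the Newton direction does the real work; everything else is a verification of algebraic identities the surrounding text has essentially set up.
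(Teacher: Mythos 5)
Your proposal is correct and follows essentially the same route as the paper: the paper also proves this theorem by observing that the algorithm of Fig.~\ref{fig:approx-uip} generates exactly the same $(x,y)$ iterates as the general method, so that Theorem~\ref{thm:itworks} applies verbatim. In fact you supply more detail than the paper does---in particular the verification that $\Delta x - \Delta y = \Phi(\Phi^\dag - U)\Delta x$ lies in the range of $\Phi$, so that the appended constraint is consistent and the elimination down to the $k\times k$ system (with $G$ invertible by Lemma~\ref{lem:Ginv}) recovers the unique Newton direction---which the paper leaves implicit.
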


\subsection*{Acknowledgements}

This work was supported by ONR MURI grant number N00014-09-1-1052.

\bibliographystyle{unsrt}
\bibliography{lcpmdp}

\appendix

\section{Proofs of Lemmas}

\begin{proofof}{Lemma \ref{lem:pot-bounds-gap}}
  Define $u_i=y_i x_i$ and let $u = \sum_i u_i$.  Then $x\tr y = u$, and
  \begin{align*}
    n &\ln x\tr y - \sum_{i=1}^n \ln x_i - \sum_{i=1}^n \ln y_i \\
    &= -\sum_{i=1}^n\ln u_i/u > 0
  \end{align*}
  since $0<u_i/u<1$ for all $i$.
\end{proofof}

\begin{proofof}{Lem.~\ref{lem:unique-step}}
  Since $x>0$, $X$ is invertible.  So, we can use Gaussian elimination
  on~(\ref{eq:update-direction}) to arrive at
  \[
  (-M-X^{-1}Y)\Delta x = Mx+q-y-X^{-1}(t\one-x\circ y)
  \]
  (In particular, subtract $X^{-1}$ times the first row from the
  second row).

  Since $M$ is positive semidefinite and $x,y>0$, $M+X^{-1}Y$ is
  strictly positive definite, and so we can solve uniquely for $\Delta
  x$.  We can then substitute $\Delta x$ into the first row
  of~(\ref{eq:update-direction}), which leads to a unique solution for
  $\Delta y$ since $X$ is invertible.
\end{proofof}

\begin{proofof}{Lemma~\ref{lem:ln-upper}}
  By construction, the left-hand and right-hand sides
  of~(\ref{eq:ln-upper}) match in value and first derivative at
  $\Delta z=0$.  The second derivative of the left-hand side with
  respect to $\Delta z$ is $(z+\Delta z)^{-2}$, while that of the
  right-hand side is $\tau z^{-2}$.  When $\Delta z \geq 0$, $\tau
  z^{-2}\geq(z+\Delta z)^{-2}$, so~(\ref{eq:ln-upper}) holds.  When
  $\Delta z < 0$,~(\ref{eq:ln-upper}) holds as long as
  \begin{align*}
    \int^0_{\Delta z} (z+\xi)^{-2}d\xi 
    &\leq \int^0_{\Delta z}\tau z^{-2}d\xi\\
    \bigl[-(z+\xi)^{-1}\bigr]^0_{\Delta z}
    &\leq \tau z^{-2}\bigl[\xi\bigr]^0_{\Delta z}\\
    \frac{1}{z+\Delta z}-\frac{1}{z} &\leq -\tau z^{-2}\Delta z\\
    -\Delta z &\leq \textstyle -\tau (1+\frac{\Delta z}{z}) \Delta z\\
    1 &\leq \textstyle \tau (1+\frac{\Delta z}{z})\\
    \textstyle \frac{1}{\tau}-1 &\leq \textstyle \frac{\Delta z}{z}
  \end{align*}
  (In the third line from the bottom we multiply through by
  $z(z+\Delta z)>0$, and in the second line from the bottom we divide
  through by $-\Delta z>0$.)
\end{proofof}

\begin{proofof}{Lemma~\ref{lem:glen}}
  Write $\xi = S^{-1}\one$, i.e., $\xi_i=1/s_i$.  Write
  $\mu=\frac{s\tr s}{n}$.  We have:
  \begin{align*}
    \|S^{-1}g\|^2 &= \textstyle \|\beta\mu\xi - s\|^2 \\
    &= \textstyle \|\beta(\mu\xi-s) - (1-\beta)s\|^2 \\
    &= \textstyle \|\beta(\mu\xi-s)\|^2 + \|(1-\beta)s\|^2
  \end{align*}
  since $(\mu\xi-s)\tr s = n\mu - s\tr s = 0$.  The first term is a
  sum of squares, so is at least as large as any of its components:
  \begin{align*}
    \|S^{-1}g\|^2 &\geq \textstyle \beta^2(\mu\frac{1}{s_0}-s_0)^2 + (1-\beta)^2s\tr s \\
    &= \textstyle \frac{\beta^2}{s_0^2}[(\mu-s_0^2)^2 + \frac{(1-\beta)^2}{\beta^2} s_0^2 n\mu] \\
    &= \textstyle \frac{\beta^2}{s_0^2}[\mu^2-\mu s_0^2 + s_0^4]
  \end{align*}
  since $1-\beta = \frac{\sqrt n}{n+\sqrt n}$, so
  $n\frac{(1-\beta)^2}{\beta^2} = n \frac{(\sqrt n)^2}{n^2} = 1$.
  Finally, we can complete the square of $(\frac{1}{2}\mu-s_0^2)$, getting:
  \begin{align*}
    \|S^{-1}g\|^2 &\geq \textstyle \frac{\beta^2}{s_0^2}[\frac{3}{4}\mu^2+ (\frac{1}{2}\mu- s_0^2)^2] \\
    &\geq \textstyle \frac{3}{4}\mu^2 \frac{\beta^2}{s_0^2}
  \end{align*}
  as desired.
\end{proofof}

\begin{proofof}{Lemma~\ref{lem:Ginv}}
  Let $D = (X+Y)^{-1}Y$.  Note that $D$ is diagonal, with all elements
  strictly between $0$ and $1$ (the $i$th diagonal element is $y_i/(x_i+y_i)$).
  Since $\Phi\tr \Pi_0=0$, we can rewrite the first line of~(\ref{eq:lo-d-G}) as:
  \begin{align*}
    G &= \Phi\tr [\Phi U(D-I) - D]\Phi
  \end{align*}
  The matrix $\Phi U(I-D)$ is positive semidefinite: it has the same
  eigenvalues as its similarity transform 
  \[ 
  (I-D)^{1/2}\Phi U(I-D)^{1/2} 
  \]
  which is positive semidefinite since is of the form $X\tr A X$ for
  real matrices $A$ and $X$ with $A=\Phi U$ positive semidefinite.
  The matrix $D$ is strictly positive definite, since it is diagonal
  with strictly positive diagonal elements.  So, the sum $\Phi
  U(I-D)+D$ is also strictly positive definite, as is $\Phi\tr(\Phi
  U(I-D)+D)\Phi=-G$ since $\Phi$ has full rank.  So, $G$ is invertible 
  as claimed.
\end{proofof}

\end{document}